\newcommand{\anon}[2][]{#2}
\renewcommand*{\backref}[1]{}
\renewcommand*{\backrefalt}[4]{%
    \ifcase #1\or Cited on page~#2.\else Cited on pages~#2.\fi%
}
\colorlet{accent1}{gray}
\colorlet{accent2}{red}
\newcommand\Reals{{\mathbb{R}}}
\newcommand\Natz{{\mathbb{N}}}
\newcommand\PosNats{{\mathbb{N}^{+}}}
\newcommand\set[1]{\left\{#1\right\}}
\newcommand\Set[2]{{\left\{\,#1\,\middle|\,#2\,\right\}}}
\newcommand\sign[1]{\operatorname{sign}\!\left({#1}\right)}
\newcommand\cardinality[1]{{\left|#1\right|}}
\newcommand\range[2]{{{#1},\ldots,{#2}}}
\newcommand\setrange[2]{{\set{{#1},\ldots,{#2}}}}
\newcommand\subrange[3]{{#3_{#1},\ldots,#3_{#2}}}
\newcommand\vecrange[3]{{(\subrange{#1}{#2}{#3})}}
\newcommand\xlt{\prec}
\newcommand\xleq{\preceq}
\newcommand\xgt{\succ}
\newcommand\xgeq{\succeq}
\newcommand\xsign[1]{\operatorname{sign_{lex}}\!\left({#1}\right)}
\newcommand\W[1][]{\mathcal{W}^{#1}}
\newcommand\prange[4][h]{{\range{#2_1,#3_1,#4_1}{#2_{#1},#3_{#1},#4_{#1}}}}
\newcommand\abcrange[1][h]{\prange[#1]{a}{b}{c}}
\newcommand\w[1]{w^{#1}}
\newcommand\pa[1]{a^{#1}}
\newcommand\pb[1]{b^{#1}}
\newcommand\pc[1]{c^{#1}}
\newcommand\pd[1]{d^{#1}}
\newcommand\rank{\operatorname{rank}}
\newcommand\fneqclass[1]{{\mathfrak{F}\!\left[{#1}\right]}}
\newcommand\CS[2][h]{\Gamma(#1, #2)}
\newcommand\CALL[1]{\textsc{#1}}
\newcommand\Path{\rho}
\newcommand\pcon{\leftrightsquigarrow}
\declaretheorem[parent=section]{theorem}
\declaretheorem[sibling=theorem]{lemma}
\declaretheorem[sibling=theorem,style=definition]{definition}
\declaretheorem[sibling=theorem,style=definition]{remark}
\declaretheorem[sibling=theorem,style=definition]{algorithm}
\title{%
    Functional Equivalence and Path Connectivity\\
    of Reducible Hyperbolic Tangent Networks%
}
\author{%
  Matthew Farrugia-Roberts \\
  School of Computing and Information Systems \\
  The University of Melbourne \\
  \texttt{matthew@far.in.net}
}
\begin{document}

\maketitle

\begin{abstract}
    Understanding the learning process of artificial neural networks requires
    clarifying the structure of the parameter space within which learning
    takes place.
    A neural network parameter's \emph{functional equivalence class} is the
    set of parameters implementing the same input--output function.
    For many architectures, almost all parameters have a simple and
    well-documented functional equivalence class. However, there is
    also a vanishing minority of \emph{reducible} parameters, with
    richer functional equivalence classes caused by redundancies among the
    network's units.
    
    In this paper, we give an algorithmic characterisation of unit
    redundancies and reducible functional equivalence classes for
    a single-hidden-layer hyperbolic tangent architecture.
    We show that such functional equivalence classes are piecewise-linear
    path-connected sets, and that for parameters with a majority of redundant
    units, the sets have a diameter of at most~7 linear segments.
\end{abstract}


\section{Introduction}
\label{sec:intro}

Deep learning algorithms construct a parameter for an artificial neural
network architecture through a local search in the high-dimensional
parameter space.
This search is guided by the topography of some loss landscape. This
topography is in turn determined by the relationship between neural
network parameters and neural network input--output functions.
Thus, understanding the relationship between these parameters and functions
is key to understanding deep learning.

It is well known that neural network parameters often fail to uniquely
determine an input--output function.
For example, exchanging weights between two adjacent hidden units generally
preserves functional equivalence
    \citep{Hecht-Nielsen1990}.
For many architectures, almost all parameters have a simple 
    class of functionally equivalent parameters.
These classes have been characterised for multi-layer feed-forward
architectures with various nonlinearities
    \citep[e.g.,][]{Sussmann1992,Albertini+1993,Kurkova+Kainen1994,
    Phuong+Lampert2020,Vlacic+Bolcskei2021}.

However, all existing work on functional equivalence excludes from
consideration certain measure zero sets of parameters, for which the
functional equivalence classes may be richer.
One such family of parameters is the so-called \emph{reducible parameters}.
These parameters display certain structural redundancies, such that the same
function could be implemented with fewer hidden units
    \citep{Sussmann1992,Vlacic+Bolcskei2021},
leading to a richer functional equivalence class.

Despite their atypicality, reducible parameters may play an important role in
deep learning.
Learning exerts a non-random selection pressure on parameters, and reducible
parameters are appealing solutions due to parsimony%
    \anon[.]{ \citep[cf.][]{me-prank}.}
These parameters are a source of information singularities
    \citep[cf.][]{Fukumizu1996},
relevant to statistical theories of deep learning
    \citep{greybook,goodpaper}.
Moreover, the structure of functional equivalence classes has
implications for the topography of the loss landscape, and, therefore,
for the dynamics of learning.

In this paper, we study functional equivalence classes for single-hidden-layer
networks with the hyperbolic tangent nonlinearity, building on the
foundational work of \citet{Sussmann1992} on reducibility in this setting.
While this architecture is not immediately relevant to modern deep learning,
structural redundancy has unresearched implications for functional equivalence
in \emph{all} architectures.
A comprehensive investigation of this simple case is a first step in this
research direction.
To this end, we offer the following theoretical contributions.%
\anon{\footnote{%
    Contributions
        (\ref{itm:contrib:1}), (\ref{itm:contrib:2}), and
        (\ref{itm:contrib:3})
    also appear in the author's minor thesis
        \citep[\S5]{me-thesis}.
}}
\begin{enumerate}
    \item\label{itm:contrib:1}
        In \cref{sec:canon},
        we give a formal algorithm producing a canonical representative
        parameter from any functional equivalence class,
        by systematically eliminating all sources of structural redundancy.
        This extends prior algorithms that only handle irreducible parameters.
        %

    \item\label{itm:contrib:2}
        In \cref{sec:fneqclass}, we invert this canonicalisation algorithm to
        characterise the functional equivalence class of any parameter as a
        union of simple parameter manifolds.
        This characterisation extends the well-known result for irreducible
        parameters.

    \item\label{itm:contrib:3}
        We show that in the reducible case, the functional equivalence class
        is a piecewise-linear path-connected set---that is, any two
        functionally equivalent reducible parameters are connected by a
        piecewise linear path comprising only equivalent parameters
            (\cref{thm:connectivity}).

    \item\label{itm:contrib:4}
        We show that if a parameter has a high degree of reducibility
            (in particular, if the same function can be implemented
            using half of the available hidden units),
        then the number of linear segments required to connect any two
        equivalent parameters is at most 7
            (\cref{thm:diameter}).
\end{enumerate}

In \cref{sec:discussion}, we discuss the implications of these results for an
understanding of the structure of the parameter space, and outline directions
for future work including extensions to modern architectures.

\section{Related Work}
\label{sec:related}

\citet{Sussmann1992} studied functional equivalence in single-hidden-layer
hyperbolic tangent networks, showing that two irreducible parameters
are functionally equivalent if and only if they are related by simple
operations of exchanging and negating the weights of hidden units.
This result was later extended to architectures with a broader class of
nonlinearities
    \citep{Albertini+1993,Kurkova+Kainen1994},
to architectures with multiple hidden layers
    \citep{Fefferman+Markel1993,Fefferman1994},
and to certain recurrent architectures
\citep{Albertini+Sontag1992,Albertini+Sontag1993a,Albertini+Sontag1993b,
    Albertini+Sontag1993c}.
More recently, similar results have been found for ReLU networks
    \citep{Phuong+Lampert2020,Bona-Pellissier+2021,Stock+Gribonval2022},
and \citet{Vlacic+Bolcskei2021,Vlacic+Bolcskei2022} have generalised
Sussmann's results to a very general class of architectures and
nonlinearities.
However, all of these results have come at the expense of excluding from
consideration certain measure zero subsets of parameters with richer functional
equivalence classes.

A similar line of work has documented the global symmetries of the parameter
space---bulk transformations of the entire parameter space that preserve all
implemented functions.
The search for such symmetries was launched by \cite{Hecht-Nielsen1990}.
\citet[also \citealp{Chen+Hecht-Nielsen1991}]{Chen+1993} showed that in the
case of multi-layer hyperbolic tangent networks, all analytic symmetries
are generated by unit exchanges and negations.
\citet{Ruger+Ossen1997} extended this result to additional sigmoidal
nonlinearities.
The analyticity condition excludes discontinuous symmetries acting selectively
on, say, reducible parameters with richer equivalence classes
    \citep{Chen+1993}.

\citet{Ruger+Ossen1997} provide a canonicalisation algorithm. Their algorithm
negates each hidden unit's weights until the bias is positive, and then sorts
each hidden layer's units into non-descending order by bias weight.
This algorithm is invariant precisely to the exchanges and negations
mentioned above, but fails to properly canonicalise equivalent parameters
that differ in more complex ways.

To our knowledge there is one line of work bearing directly on the
topic of the functional equivalence classes of reducible parameters.
\citet{Fukumizu+Amari2000} and \citet{Fukumizu+2019} have catalogued methods
of adding a single hidden unit to a neural network while preserving the
network's function, and \citet{Simsek+2021} have extended this work to
consider the addition of multiple hidden units.
Though derived under a distinct framing, it turns out that the subsets of
parameter space accessible by such unit additions correspond to functional
equivalence classes, similar to those we study (though in a slightly
different architecture).
We note these similarities, especially regarding our contributions
    (\ref{itm:contrib:2}) and (\ref{itm:contrib:3}),
in
    \cref{remark:fukumizu-fneq,remark:simsek-fneq}
    and
    \cref{remark:simsek-connectivity}.

\section{Preliminaries}
\label{sec:prelims}

We consider a family of fully-connected, feed-forward neural network
architectures with
    a single input unit,
    a single biased output unit,
and
    a single hidden layer of $h \in \Natz$ biased hidden units
    with the hyperbolic tangent nonlinearity
        $\tanh(z) = (e^{z} - e^{-z}) / (e^{z} + e^{-z})$.
Such an architecture has a parameter space
    $\W_h = \Reals^{3h+1}$.
Our results generalise directly to networks with multi-dimensional
inputs and outputs, as detailed in \cref{apx:multidim}.

The weights and biases of the network's units are encoded in the parameter
vector in the format
    $(\abcrange, d) \in \W_h$
where
    for each hidden unit $i = \range1h$ there is
        an \emph{outgoing weight} $a_i \in \Reals$,
        an \emph{incoming weight} $b_i \in \Reals$,
        and a \emph{bias} $c_i \in \Reals$,
    and $d \in \Reals$ is
        an \emph{output unit bias}.
Thus each parameter $w = (\abcrange, d) \in \W_h$ indexes a
mathematical function $f_w : \Reals \to \Reals$ defined as follows:
\begin{equation*}
    f_w(x) = d + \sum_{i=1}^h a_i \tanh(b_i x + c_i).
\end{equation*}

Two parameters $w \in \W_h, w' \in \W_{h'}$ are \emph{functionally
equivalent} if and only if
    $f_w = f_{w'}$ as functions on $\Reals$
(that is, $\forall x \in \Reals, f_w(x) = f_{w'}(x)$).
Functional equivalence is of course an equivalence relation on $\W_h$.
Given a parameter $w \in \W_h$, the \emph{functional equivalence class} of
$w$, denoted $\fneqclass{w}$, is the set of all parameters in $\W_h$ that
are functionally equivalent to $w$:
\begin{equation*}
    \fneqclass{w} = \Set{w' \in \W_h}{f_w = f_{w'}}.
\end{equation*}

For this family of architectures, the functional equivalence class of almost
all parameters is a discrete set fully characterised by simple \emph{unit
negation and exchange transformations}
    $\sigma_i, \tau_{i,j} : \W_h \to \W_h$
for
    $i,j=\range1h$, where
\begin{align*}
    \sigma_i(\abcrange, d)
    &= (
        a_1,b_1,c_1,
        \ldots,
        {\color{accent2}-}a_i,
        {\color{accent2}-}b_i,
        {\color{accent2}-}c_i,
        \ldots,
        a_h,b_h,c_h,
        d
    )
\\ \nonumber
    \tau_{i,j}(\abcrange, d)
    &= (
        a_1, b_1, c_1,
        \ldots,
        c_{i-1},
        a_{\color{accent2}j},
        b_{\color{accent2}j},
        c_{\color{accent2}j},
        a_{i+1},
    \\ & \hspace{5.6em} 
        \ldots,
        c_{j-1},
        a_{\color{accent2}i},
        b_{\color{accent2}i},
        c_{\color{accent2}i},
        a_{j+1},
        \ldots,
        a_h,b_h,c_h,
        d
    ).
\end{align*}
More formally, these transformations generate the full functional equivalence
class for all so-called irreducible parameters
    \citep{Sussmann1992}.
A parameter $w = (\abcrange, d) \in \W_h$ is \emph{reducible} if and only if
it satisfies any of the following conditions
    (otherwise, $w$ is \emph{irreducible}):
\begin{enumerate}[label=(\roman*)]
    \item\label{item:reducibility:1}
        $a_i = 0$ for some $i$, or
    \item\label{item:reducibility:2}
        $b_i = 0$ for some $i$, or
    \item\label{item:reducibility:3}
        $(b_i, c_i) = (b_j, c_j)$ for some $i \neq j$, or
    \item\label{item:reducibility:4}
        $(b_i, c_i) = (-b_j, -c_j)$ for some $i \neq j$.
\end{enumerate}

\citet{Sussmann1992} also showed that in this family of architectures,
reducibility corresponds to \emph{non-minimality}:
a parameter $w \in \W_h$ is reducible if and only if $w$ is functionally
equivalent to some $w' \in \W_{h'}$ with fewer hidden units $h' < h$.
We define the \emph{rank} of $w$, denoted $\rank(w)$, as the minimal number
of hidden units required to implement $f_w$:
\begin{equation*}
    \rank(w) = \min\Set{h' \in \Natz}{\exists w' \in \W_{h'};\ f_w = f_{w'}}.
\end{equation*}

Finally, we make use of the following notions of connectivity for a set of
parameters.
Given a set $W \subseteq \W_h$, define a \emph{piecewise linear path in $W$}
as a continuous function $\Path : [0,1] \to W$ comprising a finite number of
linear segments.
Two parameters $w, w' \in \W_h$ are \emph{piecewise-linear path-connected in
$W$}, denoted $w \pcon w'$ (with $W$ implicit), if there exists a piecewise
linear path in $W$ such that $\Path(0) = w$ and $\Path(1) = w'$.
Note that $\pcon$ is an equivalence relation on $W$.
A set $W \subseteq \W_h$ is itself \emph{piecewise-linear path-connected} if
and only if $\pcon$ is full, that is, all pairs of parameters in $W$ are
piecewise linear path-connected in $W$.

The \emph{length} of a piecewise linear path is the number of maximal
linear segments comprising the path.
The \emph{distance} between two piecewise linear path-connected parameters is
the length of the shortest path connecting them.
The \emph{diameter} of a piecewise linear path-connected set is the largest
distance between any two parameters in the set.

\section{Parameter Canonicalisation}
\label{sec:canon}

A parameter \emph{canonicalisation algorithm} maps each parameter in a
functional equivalence class to a canonical representative parameter within
that class.
A canonicalisation algorithm therefore serves as a computational test of
functional equivalence.

Prior work has described canonicalisation algorithms for certain irreducible
parameters \citep{Ruger+Ossen1997}; but when applied to functionally equivalent
reducible parameters, such algorithms may fail to produce the same output.
We introduce a canonicalisation algorithm that properly canonicalises both
reducible and irreducible parameters, based on similar negation and sorting
stages, combined with a novel \emph{reduction} stage. This stage effectively
removes or `zeroes out' redundant units through various operations,
isolating a functionally equivalent but irreducible subparameter.

\begin{algorithm}[Parameter canonicalisation]
    \label{algo:parameter-canonicalisation}
    Given a parameter space $\W_h$, proceed:
    \begin{algorithmic}[1]
        \Procedure{Canonicalise}{$w=(\abcrange, d) \in \W_h$}
            \LComment{%
                Stage 1: Reduce the parameter, zeroing out redundant hidden
                units
            }
            \State $Z \gets \set{}$ \Comment{keep track of `zeroed' units}
            \While{any of the following four conditions hold}
                \If{for some hidden unit $i \notin Z$, $a_i = 0$}
                    \Comment{reducibility condition \ref{item:reducibility:1}}
                    \State $b_i, c_i \gets 0$
                    \State $Z \gets Z \cup \set{i}$
                \ElsIf{for some hidden unit $i \notin Z$, $b_i = 0$}
                    \Comment{------ \ref{item:reducibility:2}}
                    \State $d \gets d + a_i \tanh(c_i)$
                    \State $a_i, c_i \gets 0$
                    \State $Z \gets Z \cup \set{i}$
                \ElsIf{
                    for some hidden units $i, j \notin Z, i \neq j$,
                    $(b_i,c_i) = (b_j, c_j)$
                }
                    \Comment{------ \ref{item:reducibility:3}}
                    \State $a_j \gets a_j + a_i$
                    \State $a_i, b_i, c_i \gets 0$
                    \State $Z \gets Z \cup \set{i}$
                \ElsIf{
                    for some hidden units $i, j \notin Z, i \neq j$,
                    $(b_i,c_i) = (-b_j, -c_j)$
                }
                    \Comment{------ \ref{item:reducibility:4}}
                    \State $a_j \gets a_j - a_i$
                    \State $a_i, b_i, c_i \gets 0$
                    \State $Z \gets Z \cup \set{i}$
                \EndIf
            \EndWhile

            \LComment{
                Stage 2: Negate the nonzero units to have positive incoming
                weights
            }
            \For{each hidden unit $i \notin Z$}
                \State $a_i,b_i,c_i \gets \sign{b_i} \cdot (a_i,b_i,c_i)$
            \EndFor

            \LComment{
                Stage 3: Sort the units by their incoming weights and biases
            }
            \State
                $\pi \gets$ a permutation sorting $i=\range1h$
                    by decreasing $b_i$, breaking ties with decreasing $c_i$
            \State
                $
                    w \gets (
                        a_{\pi(1)}, b_{\pi(1)}, c_{\pi(1)},
                        \ldots,
                        a_{\pi(h)}, b_{\pi(h)}, c_{\pi(h)},
                        d
                    )
                $
            \LComment{
                Now, $w$ has been mutated into the canonical equivalent
                parameter
            }
            \State \Return $w$
        \EndProcedure
    \end{algorithmic}
\end{algorithm}

The following theorem establishes the correctness of
    \cref{algo:parameter-canonicalisation}.

\begin{theorem}
    \label{thm:parameter-canonicalisation}
    Let $w, w' \in \W_h$.
    Let $v = \CALL{Canonicalise}(w)$ and $v' = \CALL{Canonicalise}(w')$.
    Then
    \begin{enumerate}[label=(\roman*)]
        \item $v$ is functionally equivalent to $w$; and
        \item if $w$ and $w'$ are functionally equivalent, then $v = v'$.
    \end{enumerate}
\end{theorem}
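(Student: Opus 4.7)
The plan is to verify the two claims separately by tracking $f_w$ through each of the three stages.

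For claim (i), I would show that every mutation preserves $f_w$. In Stage 1, each of the four branches implements an algebraic identity: condition \ref{item:reducibility:1} zeroes a unit that already contributes nothing; condition \ref{item:reducibility:2} absorbs the constant contribution $a_i \tanh(c_i)$ into $d$; condition \ref{item:reducibility:3} sums two collinear contributions via linearity; condition \ref{item:reducibility:4} does the same after invoking the oddness of $\tanh$. Stage 2's negation is again a consequence of oddness, and Stage 3's permutation merely reorders a commutative sum. Termination of the while loop is immediate because $|Z|$ strictly increases each iteration and is bounded above by $h$.

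For claim (ii), let $u$ and $u'$ be the post-Stage-1 parameters produced from $w$ and $w'$. The while loop only exits when none of the four reducibility conditions holds among the non-zeroed units, so in each of $u$ and $u'$ the non-zeroed units form an irreducible subparameter, and by claim (i) the two subparameters implement the same function. The result of \citet{Sussmann1992} then forces these subparameters to have the same size (namely $\rank(w)$), the same output bias $d$, and to be related by a composition of the unit-exchange and unit-negation transformations $\tau_{i,j}$ and $\sigma_i$.

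The main obstacle is to verify that Stages 2 and 3 eliminate this remaining sign-and-permutation ambiguity. Stage 2 normalises signs so that every non-zeroed unit has positive $b_i$. I would then argue that after Stage 2 no two non-zeroed units share a $(b_i,c_i)$ pair: any such coincidence would, before Stage 2, have meant $(b_i,c_i) = (\pm b_j, \pm c_j)$, triggering condition \ref{item:reducibility:3} or \ref{item:reducibility:4} and causing one unit to be zeroed during Stage 1. Consequently the ordering produced by Stage 3 is uniquely determined on the non-zeroed units, and the identical zeroed units (all triples $(0,0,0)$) tie to pad the tail of both $v$ and $v'$ indistinguishably. Combining these observations yields $v = v'$.
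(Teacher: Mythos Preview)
Your proposal is correct and follows essentially the same approach as the paper's proof: verify that each mutation in Stages~1--3 preserves the function, then appeal to \citet{Sussmann1992} on the irreducible subparameters surviving Stage~1, and finally argue that Stages~2 and~3 collapse the residual sign-and-permutation freedom. Your version is in fact more explicit than the paper's---in particular, your observation that no two non-zeroed units can share a $(b_i,c_i)$ pair after Stage~2 (else Stage~1 would already have merged them) is exactly the detail underlying the paper's terse ``invariant to precisely such transformations by construction.''
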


\begin{proof}
    For (i), observe that $f_w$ is maintained by each iteration of the loops
        in Stages 1 and 2, and by the permutation in Stage 3.
    For (ii), observe that Stage 1 isolates
        functionally equivalent \emph{and irreducible} subparameters
        $u \in \W_r$ and $u' \in \W_{r'}$
    of the input parameters $w$ and $w'$
        (excluding the zeroed units).
    We have $f_u = f_w = f_{w'} = f_{u'}$, so by the results of
        \citet{Sussmann1992},
    $r = r' = \rank(w)$, and $u$ and $u'$ are related by unit negation and
    exchange transformations.
    This remains true in the presence of the zero units.
    Stages 2 and 3 are invariant to precisely such transformations by
    construction.
\end{proof}

\section{Full Functional Equivalence Class}
\label{sec:fneqclass}

\Cref{algo:parameter-canonicalisation} produces a consistent output for all
parameters within a given functional equivalence class. It serves as the
basis for the following characterisation of the full functional equivalence
class.

The idea behind the characterisation is to enumerate the various ways for a
parameter's units to be reduced, negated, and sorted throughout
    \cref{algo:parameter-canonicalisation}.
Each such \emph{canonicalisation trace} corresponds to a simple set of
parameters that takes exactly this path through the algorithm, as follows.

\begin{definition}[Canonicalisation trace]
    \label{def:canonicalisation-trace}
    Let $r, h \in \Natz$, $r \leq h$. A
        \emph{canonicalisation trace of order $r$ on $h$ units}
    is a tuple
        $(\sigma, \tau)$,
    where $\sigma \in \set{-1,+1}^h$ is a sign vector
        (interpreted as tracking unit negation throughout the algorithm);
    and $\tau : \setrange1h \to \set{0,\range1h}$ is a function with range
        including $\setrange1r$
        (interpreted as tracking unit reduction and permutation 
        throughout the algorithm).
\end{definition}

\begin{theorem}
    \label{thm:characterisation}
    Let
        $w \in \W_h$
    and
        $v = (\prange\alpha\beta\gamma,\delta) = \CALL{Canonicalise}(w)$.
    Let
        $r = \rank(w)$.
    Then the functional equivalence class $\fneqclass{w} \subset \W_h$ is a
    union of subsets
    \begin{equation}
        \label{eq:characterisation}
        \fneqclass{w}
        =
        \bigcup_{(\sigma, \tau) \in \CS{r}}
        \left(
            X^\delta_{\tau^{-1}[0]}
            \cap
                \smashoperator{\bigcap_{i=1}^r}
                Y^{\alpha_i,\beta_i,\gamma_i}_{\sigma,\tau^{-1}[i]}
            \cap
                \smashoperator{\bigcap_{i=r+1}^h}
                Z_{\sigma,\tau^{-1}[i]}
        \right)
    \end{equation}
    where $\CS{r}$ denotes the set of all canonicalisation traces of
    order $r$ on $h$ units and
    \begin{align*}
        X^\delta_I
            &= \Set{(\abcrange, d)\in\W_h}
                {\begin{gathered}\textstyle
                    \forall i \in I,
                    b_i = 0
                \text{ and}\\\textstyle
                    d+\sum_{i \in I} a_i \tanh(c_i) = \delta
                \end{gathered}}
            ;
    \\  Y^{\alpha, \beta, \gamma}_{\sigma, I}
            &= \Set{(\abcrange, d)\in\W_h}
                {\begin{gathered}\textstyle
                    \forall i \in I,
                    \sigma_i \cdot (b_i, c_i) = (\beta, \gamma)
                \\\text{and }\textstyle
                    \sum_{i \in I} \sigma_i a_i = \alpha
                \end{gathered}}
            ;\text{ and}
    \\  Z_{\sigma,I}
            &= \Set{(\abcrange, d)\in\W_h}
                {\begin{gathered}\textstyle
                    \forall i, j \in I,
                    \sigma_i\cdot(b_i, c_i) = \sigma_j\cdot(b_j, c_j)
                \\\text{and }\textstyle
                    \sum_{i \in I} \sigma_i a_i = 0
                \end{gathered}}
            .
    \end{align*}
\end{theorem}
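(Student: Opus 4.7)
My plan is to prove the two set inclusions separately. For the direction $\supseteq$, I would use a direct computation: fix a trace $(\sigma, \tau) \in \CS{r}$ and a parameter $w' = (\abcrange, d)$ lying in the intersection on the right-hand side, then expand $f_{w'}(x) = d + \sum_{i=1}^{h} a_i \tanh(b_i x + c_i)$ after partitioning the hidden unit indices by the value of $\tau$. For $i \in \tau^{-1}[0]$, the $X$-constraint forces $b_i = 0$, so $\tanh(b_i x + c_i) = \tanh(c_i)$, and the summation constraint collapses these units' contributions together with $d$ into the constant $\delta$. For $i \in \tau^{-1}[k]$ with $1 \leq k \leq r$, the $Y$-constraint $\sigma_i(b_i,c_i) = (\beta_k,\gamma_k)$ together with the oddness of $\tanh$ gives $a_i \tanh(b_i x + c_i) = (\sigma_i a_i)\tanh(\beta_k x + \gamma_k)$; summing over $\tau^{-1}[k]$ and using $\sum \sigma_i a_i = \alpha_k$ produces a net contribution $\alpha_k \tanh(\beta_k x + \gamma_k)$. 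For $i \in \tau^{-1}[k]$ with $k > r$, the $Z$-constraints give a zero net contribution by the same reasoning. Summing recovers $f_v(x) = f_w(x)$.

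For the direction $\subseteq$, I would construct a trace for a given $w' \in \fneqclass{w}$ by running \CALL{Canonicalise}$(w')$, which by \cref{thm:parameter-canonicalisation} outputs $v$. Tracing Stage 1 classifies each hidden unit $i$: either $i$ is zeroed by reducibility condition \ref{item:reducibility:2}, absorbed into another unit by \ref{item:reducibility:3} or \ref{item:reducibility:4}, zeroed by \ref{item:reducibility:1}, or $i$ survives as one of the $r$ nonzero units. Set $\tau(i) = 0$ in the first case; otherwise follow the chain of absorptions from $i$ to a final representative unit $j$, and set $\tau(i)$ to be the Stage 3 sorted position of $j$ when $j$ survives, or a fresh shared position in $\setrange{r+1}{h}$ when $j$ is ultimately zeroed by \ref{item:reducibility:1}. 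Define $\sigma_i$ as the product of the factors of $-1$ introduced at each \ref{item:reducibility:4}-style absorption along $i$'s chain, times the Stage 2 sign eventually applied to $j$. Since exactly $r$ units survive Stage 1, the range of $\tau$ covers $\setrange{1}{r}$, so $(\sigma, \tau) \in \CS{r}$. Verifying the $X$-, $Y$- and $Z$-constraints for $w'$ then reduces to checking conservation properties of Stage 1: an anchor's $(b,c)$-values are untouched throughout Stage 1, the running signed-$a$ total at each anchor equals the eventual $\alpha_k$ (or $0$ for a zero-chain), and the original $d$ plus the accumulated $a_i \tanh(c_i)$ over condition-\ref{item:reducibility:2} units equals the final $\delta$.

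The main obstacle is the careful bookkeeping for the $\subseteq$ direction. Stage 1 is non-deterministic in its choice of which condition to apply and to which units, so I would just fix any particular execution---since we only need $w'$ to lie in \emph{some} term of the union, one trace suffices. The chain-of-absorptions analysis is cleanest by induction on Stage 1 iterations, maintaining the invariants that (a) the current $d$ equals the original $d$ plus $\sum a_i \tanh(c_i)$ over units already assigned $\tau(i) = 0$, (b) at each surviving anchor $j$, the current $a_j$ equals the signed-$a$ sum of all units already merged into $j$'s chain, and (c) every already-absorbed unit $i$ satisfies $\sigma_i (b_i, c_i)$ equal to the appropriate fixed signed version of its chain's representative. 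A minor subtlety is that a single $w'$ may lie in several terms of the union---for example a unit with $a_i = 0$ that also matches some other unit's $(b, c)$ may legitimately be placed in either a $Y$- or a $Z$-group---but this redundancy only affects the tightness of the decomposition, not the claimed equality of sets.
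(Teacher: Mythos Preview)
Your proposal is correct and follows essentially the same approach as the paper: direct computation for $\supseteq$, and for $\subseteq$ tracing an execution of \CALL{Canonicalise} on $w'$ to build $(\sigma,\tau)$. The only cosmetic difference is that the paper defines $\sigma_i$ directly as $\sign{b'_i}$ (with $+1$ when $b'_i=0$) rather than as an accumulated product of merge-signs times the Stage~2 sign; since a condition-\ref{item:reducibility:4} merge between $i$ and $j$ forces $\sign{b'_i}=-\sign{b'_j}$ and a condition-\ref{item:reducibility:3} merge forces $\sign{b'_i}=\sign{b'_j}$, the two definitions coincide, and your more explicit chain-invariant bookkeeping simply spells out what the paper's terser construction leaves implicit.
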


\begin{proof}
    Suppose $w' = (\prange{a'}{b'}{c'}, d) \in \W_h$ is in the union in
        (\ref{eq:characterisation}),
    and therefore in the intersection for some canonicalisation trace
        $(\sigma, \tau) \in \CS{r}$.
    Then $f_{w'} = f_v = f_w$, as follows:
    \begin{align*}
        f_{w'}(x)
        & = d'
            + \smashoperator{\sum_{i\in\tau^{-1}[0]}}
                a'_i \tanh(b'_i x + c'_i)
            + \sum_{j = 1}^r
                \smashoperator{\sum_{\hspace{2.0em}i\in\tau^{-1}[j]}}
                    a'_i \tanh(b'_i x + c'_i)
            + \smashoperator{\sum_{j=r+1}^h}
                \smashoperator{\sum_{\hspace{2.5em}i\in\tau^{-1}[j]}}
                    a'_i \tanh(b'_i x + c'_i)
        \\
        & = \delta + \sum_{j=1}^r \alpha_j \tanh(\beta_j x_i + \gamma_j)
            \text{ since }w' \in
                    X^\delta_{\tau^{-1}[0]}
                    \cap
                        \smashoperator{\bigcap_{j=1}^r}
                        Y^{\alpha_j,\beta_j,\gamma_j}_{\sigma,\tau^{-1}[j]}
                    \cap
                        \smashoperator{\bigcap_{j=r+1}^h}
                        Z_{\sigma,\tau^{-1}[j]}
                    .
    \end{align*}

    Now, suppose $w' \in \fneqclass{w}$.
    Construct a canonicalisation trace $(\sigma, \tau) \in \CS{r}$ following
    the execution of \cref{algo:parameter-canonicalisation} on $w'$.
    Set $\sigma_i = -1$ where $\sign{b'_i} = -1$, otherwise $+1$.
    Construct $\tau$ from identity as follows.
        In each Stage 1 iteration, if the second branch is chosen,
        remap $\tau(i)$ to $0$.
        If the third or fourth branch is chosen, for $k \in \tau^{-1}[i]$
        (including $i$ itself), remap $\tau(k)$ to $j$.
        Finally, incorporate the Stage 3 permutation $\pi$:
            simultaneously for $k \notin \tau^{-1}[0]$,
                remap $\tau(k)$ to $\pi(\tau(k))$.
    
    Note $\CALL{Canonicalise}(w') = v$ by
        \cref{thm:parameter-canonicalisation}.
    Then
        $w' \in \smash{X_{\tau^{-1}[0]}^\delta}$
    because $\tau^{-1}[0]$ contains 
    exactly those units incorporated into $\delta$.
    Moreover, for $j = \range1r$,
        $w' \in Y_{\sigma,\tau^{-1}[j]}^{\alpha_j,\beta_j,\gamma_j}$,
    because $\tau^{-1}[j]$ contains exactly those units incorporated into
    unit $j$ of $v$, and $\sigma$ their relative signs ($\beta_j > 0$).
    Likewise, for $j \in \range{r+1}{h}$,
        $w' \in Z_{\sigma,\tau^{-1}[j]}$
    (which is vacuous if $\tau^{-1}[j]$ is empty).
\end{proof}

\begin{remark}
    If $w \in \W_h$ is irreducible, then $\rank(w) = h$.
    For $(\sigma, \tau) \in \CS{h}$, $\tau$ is a permutation (since the range
    must include $\setrange1h$).
    The set of traces therefore corresponds to the set of transformations
    generated by unit negations and transpositions, as in
        \citet{Sussmann1992}.
\end{remark}

\begin{remark}
    \label{remark:fukumizu-fneq}
    When $\rank(w) = h-1$, there are, modulo sign vectors and permutations,
    essentially three canonicalisation traces, corresponding to the three
    ways of adding an additional unit to a $(h-1)$-unit network discussed
    by \citet{Fukumizu+Amari2000} and \citet{Fukumizu+2019}:
        to introduce a new constant unit or one with zero output, or 
        to split an existing unit in two.
\end{remark}

\begin{remark}
    \label{remark:simsek-fneq}
    Similarly, in \citet[Definitions 3.2 and 3.3]{Simsek+2021},
    an $(r+j)$-tuple coupled with a permutation play the role of $\tau$ in
    characterising the \emph{expansion manifold}, akin to the
    functional equivalence class but from the dual perspective of adding units
    to an irreducible parameter.
    \citet{Simsek+2021} study a setting without a unit negation symmetry,
    so there is no need for a sign vector.
\end{remark}

\section{Path Connectivity}
\label{sec:connectivity}

In this section, we show that the reducible functional equivalence class
is piecewise linear path-connected (\cref{thm:connectivity}),
and, for parameters with rank at most half of the available number of
hidden units, has diameter at most 7 linear segments (\cref{thm:diameter}).

\begin{theorem}
    \label{thm:connectivity}
    Let $w \in \W_h$. If $w$ is reducible, then $\fneqclass{w}$ is piecewise
    linear path-connected.
\end{theorem}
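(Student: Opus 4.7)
The plan is to use the decomposition of $\fneqclass{w}$ supplied by \cref{thm:characterisation} as a union of components $W_{\sigma,\tau}$, one per canonicalisation trace $(\sigma,\tau)\in\CS{r}$, and to argue separately about \textbf{local connectivity} within each component and \textbf{global connectivity} between components. Specifically, I will show that each $W_{\sigma,\tau}$ is piecewise linear path-connected on its own, and that every $w'\in\fneqclass{w}$ admits a piecewise linear path in $\fneqclass{w}$ to the canonical form $v=\CALL{Canonicalise}(w)$. Transitivity of $\pcon$ then gives the theorem, and the hypothesis $r<h$ enters crucially in the second step by furnishing at least one ``spare'' unit that can be used as scratch space.

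For local connectivity, the $Y$ and $Z$ factors are affine subspaces: $Y^{\alpha,\beta,\gamma}_{\sigma,I}$ pins each $(b_i,c_i)=\sigma_i(\beta,\gamma)$ and imposes a single linear equation on the $a_i$, while $Z_{\sigma,I}$ requires $\sigma_i(b_i,c_i)$ to take a common value across $i\in I$ (a linear constraint) together with a zero-sum equation on the $a_i$. Only $X^\delta_{\tau^{-1}[0]}$ is genuinely nonlinear, due to the constraint $d+\sum a_i\tanh(c_i)=\delta$; but any two elements of this set are joined by a three-segment path routed through the ``rest point'' at which all $a_i=0$ and $d=\delta$, using the linear family $((1-t)a,0,c,(1-t)d+t\delta)$, which preserves the constraint because the $a_i\tanh(c_i)$ contributions and the $d-\delta$ slack scale in lockstep. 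Since the three factors constrain disjoint coordinates, connectivity of the product follows.

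For global connectivity, I will simulate the canonicalisation algorithm on $w'$ as a piecewise linear path, realising each of its discrete steps by a small finite number of linear segments. The Stage~1 reductions are directly linear: transferring $a$-mass between units with matching $(b,c)$ up to sign preserves the relevant weighted sum; absorbing a $b_i=0$ unit into $d$ is the path $((1-t)a_i,\ldots,d+ta_i\tanh(c_i))$; and zeroing a unit whose $a$ already vanishes is trivial. The main obstacle is Stages~2 and~3 (negation and permutation), which are discrete and cannot be effected by straight-line interpolation while preserving $f_w$. The resolution is a ``copy--transfer--erase'' primitive on a spare unit $k$ (available because $r<h$): linearly adjust $(b_k,c_k)$ to match the target $(b,c)$ (safe since $a_k=0$), linearly transfer $a$-mass from the source unit to $k$ (inserting a sign flip to effect negation if required), then linearly zero the source's $(b,c)$ (safe since the source's $a$ is now zero). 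Iterating this primitive realises any composition of negations and permutations on the active units as finitely many linear segments lying wholly inside $\fneqclass{w}$; composing with the Stage~1 linear path yields the required path from $w'$ to $v$.
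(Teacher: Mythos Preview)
Your global connectivity argument is essentially the paper's proof: simulate \CALL{Canonicalise} on $w'$ segment by segment, handling each Stage~1 reduction by direct linear interpolation and Stages~2--3 by a copy--transfer--erase primitive on a spare unit (the paper calls this a \emph{blank-exchange manoeuvre} and deploys it identically---twice for each negation, three times per transposition, with the spare unit guaranteed by reducibility exactly as you note). Your local connectivity analysis of each component $W_{\sigma,\tau}$ via \cref{thm:characterisation} is correct but unnecessary: as you yourself observe, once every $w'\in\fneqclass{w}$ is connected to $v$, transitivity of $\pcon$ alone gives the theorem, so the decomposition into traces is never actually used. The paper's proof omits it entirely and works directly from the algorithm.
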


\begin{proof}
    It suffices to show that each reducible parameter
        $w \in \W_h$
    is piecewise linear path-connected in $\fneqclass{w}$ to its canonical
    representative $\CALL{Canonicalise}(w)$.
    The path construction proceeds by tracing the parameter's mutations in
    the course of execution of
        \cref{algo:parameter-canonicalisation}.
    For each iteration of the loops in Stages 1 and 2, and for each
    transposition in the permutation in Stage 3, we construct a multi-segment
    sub-path.
    To describe these sub-paths, we denote the parameter at the beginning of
    each sub-path as $w = (\abcrange,d)$, noting that this parameter is mutated
    throughout the algorithm, but is functionally equivalent to the original
    $w$ at all of these intermediate points.

    \begin{enumerate}[label={\arabic*.}]
        \item
            In each iteration of the Stage 1 loop, the construction depends
            on the chosen branch, as follows.
            Some examples are illustrated in \cref{fig:moves}.
            \begin{enumerate}[label={(\roman*)}]
                \item A direct path interpolating $b_i$ and $c_i$ to zero.
                \item A two-segment path, interpolating $a_i$ to zero and $d$
                    to $d + a_i\tanh(c_i)$, then $c_i$ to zero.
                \item A two-segment path, interpolating $a_i$ to zero and
                    $a_j$ to $a_j+a_i$, then $b_i$ and $c_i$ to zero.
                \item A two-segment path, interpolating $a_i$ to zero and
                    $a_j$ to $a_j-a_i$, then $b_i$ and $c_i$ to zero.
            \end{enumerate}
    \end{enumerate}
    \begin{figure}[!h]
        \centering
        \includegraphics[width=\textwidth]{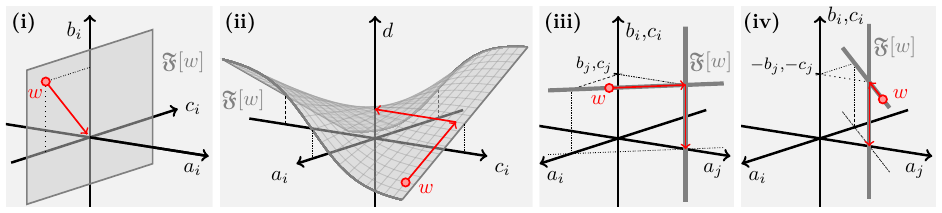}
        \caption{\label{fig:moves}%
        Example paths constructed for each of the Stage~1 branches.
        Other dimensions held fixed.
        }
    \end{figure}
    Since (the original) $w$ is reducible, (the current) $w$ must have gone
    through at least one iteration in Stage 1, and must have at least one
    \emph{blank} unit $k$ with $a_k,b_k,c_k=0$.
    From any such parameter $w$, there is a three-segment path in
    $\fneqclass{w}$ that implements a \emph{blank-exchange manoeuvre}
    transferring the weights of another unit $i$ to unit $k$, and leaving
    $a_i,b_i,c_i=0$:
        first interpolate $b_k$ to $b_i$ and $c_k$ to $c_i$;
        then interpolate $a_k$ to $a_i$ and $a_i$ to zero;
        then interpolate $b_i$ and $c_i$ to zero.
    Likewise, there is a three-segment path that implements a \emph{negative
    blank-exchange manoeuvre}, negating the weights as they are interpolated
    into the blank unit.
    With these manoeuvres noted, proceed:
    \begin{enumerate}[resume]
        \item
            In each iteration of the Stage 2 loop for which $\sign{b_i}=-1$,
            let $k$ be a blank unit, and construct a six-segment path.
                First, blank-exchange unit $i$ into unit $k$.
            Then, negative blank-exchange unit $k$ into unit $i$.
            The net effect is to negate unit $i$.
        \item
            In Stage 3, construct a path for each segment in a decomposition
            of the permutation $\pi$ as a product of transpositions.
            Consider the transposition $(i, j)$.
            If $i$ or $j$ is blank, simply blank-exchange them.
            If neither is blank, let $k$ be a blank unit. Construct a
            nine-segment path, using three blank-exchange manoeuvres,
            using $k$ as `temporary storage' to implement the transposition:
                first blank-exchange units $i$ and $k$,
                then blank-exchange units $i$ (now blank) and $j$,
                then blank-exchange units $j$ (now blank) and $k$ (containing
                    $i$'s original weights).
    \end{enumerate}
    The resulting parameter is the canonical representative and it can be
    verified that each segment in each sub-path remains in $\fneqclass{w}$
    as required.
\end{proof}

\begin{remark}
    \label{remark:simsek-connectivity}
    \citet[Theorem~B.4]{Simsek+2021} construct similar paths to show the
    connectivity of their expansion manifold
        (cf.~\cref{remark:simsek-fneq}).
    They first connect reduced-form parameters using blank-exchange
    manoeuvres and then show inductively that each unit addition preserves
    connectivity.
\end{remark}

\begin{theorem}
    \label{thm:diameter}
    Let $w \in \W_h$. If $\rank(w) \leq \frac{h}{2}$, then $\fneqclass{w}$
    has diameter at most $7$.
\end{theorem}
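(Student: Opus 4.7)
The plan is to connect any two equivalent parameters $w', w'' \in \fneqclass{w}$ by a path of at most seven segments, routing both through a ``canonical-adjacent'' intermediate family that exploits the slack $h - r \geq r$ of blank positions for parallel routing.

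First, I would define the \emph{canonical-adjacent family} $\mathcal{C}$: the set of parameters $w$ for which positions $1,\ldots,r$ hold the canonical weights $(\alpha_j,\beta_j,\gamma_j)$ of $v = \CALL{Canonicalise}(w)$, while positions $r{+}1,\ldots,h$ satisfy $a_i = 0$ with arbitrary $(b_i, c_i)$. A key observation is that $\mathcal{C}$ is $1$-segment connected: two elements of $\mathcal{C}$ differ only in the $(b_i, c_i)$ values at the latter positions, and these can be interpolated linearly without changing $f_w$ because $a_i = 0$ at those positions throughout.

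The main step is to bound $d(w', \mathcal{C}) \leq 3$ for every $w' \in \fneqclass{w}$. The construction parallelises the moves from the proof of \cref{thm:connectivity}, viewed through \cref{thm:characterisation}:
\begin{enumerate}
    \item \emph{Collapse} (1 segment): simultaneously apply the Stage~1 reductions, concentrating each class's $a$-value into a strategically chosen representative, zeroing redundant clusters, and incorporating class-$0$ contributions into $d$.
    \item \emph{Landing pads} (1 segment): at every position in $\{1,\ldots,r\}$ not occupied by a representative (all of which now have $a_i = 0$), interpolate $(b_i, c_i)$ to the canonical value $(\beta_j, \gamma_j)$ of the intended class.
    \item \emph{Transfer} (1 segment): simultaneously transfer each class's $a$-value from its representative to its canonical landing pad, with a sign flip when $\sigma_i = -1$ (function preservation is the same calculation as for blank-exchange in \cref{thm:connectivity}).
\end{enumerate}
Strategic selection of representatives is the crux: using the slack $h - r \geq r$, one tries to place representatives in $\{r{+}1,\ldots,h\}$ (a Hall-type matching argument on the canonicalisation trace $(\sigma,\tau)$ of $w'$ restricted to the right half), so that all canonical positions are free to serve as landing pads.

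Concatenating these ingredients: for any $w', w'' \in \fneqclass{w}$, pick $\tilde c, \tilde c' \in \mathcal{C}$ with $d(w', \tilde c) \leq 3$ and $d(w'', \tilde c') \leq 3$; then $d(\tilde c, \tilde c') \leq 1$, yielding a piecewise linear path $w' \to \tilde c \to \tilde c' \to w''$ of length at most $3 + 1 + 3 = 7$.

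The main obstacle is the representative selection when some class $j$ has all its units confined to $\{1,\ldots,r\}$ (so Hall's condition for a right-half matching fails). The hardest sub-case is a singleton class sitting at its own canonical position with wrong sign, which blocks that position as a landing pad. Here I would route the flip through an auxiliary blank in $\{r{+}1,\ldots,h\}$, absorbing the sign-flip into the same three segments by pairing it with a landing pad placed outside $\{1,\ldots,r\}$ and then permuting within $\mathcal{C}$ via the $1$-segment connectivity; verifying that the slack $h - r \geq r$ always provides enough auxiliaries to handle all such conflicts in parallel within the $3$-segment budget is the delicate combinatorial core of the argument.
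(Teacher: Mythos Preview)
Your decomposition $3+1+3$ hinges on the claim $d(w',\mathcal{C})\leq 3$ for every $w'\in\fneqclass{w}$, and this claim fails. Take $r=2$, $h=4$, and let $w'$ be the canonical parameter $v$ with units~$1$ and~$2$ exchanged (units~$3,4$ fully blank). Both classes are singletons confined to $\{1,2\}$, so representative selection is forced: class~$1$ sits at position~$2$ and class~$2$ at position~$1$. Your Step~2 is then vacuous (no position in $\{1,\ldots,r\}$ is free) and Step~3 cannot fire (the $(b,c)$ at positions $1,2$ do not match their targets). More generally, any path from this $w'$ into $\mathcal{C}$ must at some point change $(b_1,c_1)$, which requires $a_1=0$ throughout that segment; but zeroing $a_1$ requires another unit carrying $\pm(\beta_2,\gamma_2)$, which does not exist initially and must be prepared by a prior $(b,c)$-move on a blank unit. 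That preparation, the $a$-transfer out, the $(b_1,c_1)$-change, and the $a$-transfer back in are four segments that cannot be merged pairwise (each $(b,c)$-move needs $a=0$ throughout; each $a$-transfer needs fixed matching $(b,c)$). Hence $d(w',\mathcal{C})\geq 4$. Your sketched fix---``permuting within $\mathcal{C}$ via the $1$-segment connectivity''---does not help: $\mathcal{C}$ as you defined it pins the canonical content to positions $1,\ldots,r$ and has no permutation freedom; placing a landing pad outside $\{1,\ldots,r\}$ lands you outside $\mathcal{C}$, and the $1$-segment connectivity is internal to $\mathcal{C}$, not a way into it.

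The paper avoids this obstruction by \emph{not} routing through $v$ at all. It connects the Stage~1 reduced forms $u$ (of $w$) and $u'$ (of $w'$) directly: the non-blank units of $u$ that collide with those of $u'$---the set $U\cap U'$---are parked in storage positions chosen from $Z\cap Z'$, and the hypothesis $r\leq h/2$ is used precisely to guarantee $\cardinality{U\cap U'}\leq\cardinality{Z\cap Z'}$. Two parallel blank-exchange manoeuvres (into storage, then out to the $U'$ positions with the appropriate signs) cost six segments, and their first/last $(b,c)$-moves merge with the two-segment reductions $w\to u$ and $u'\to w'$ on either side, yielding exactly seven. The essential difference from your plan is that the permutation and negation relating $u$ and $u'$ are absorbed symmetrically in the middle of the path, rather than being resolved entirely on one side against a fixed canonical target.
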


\begin{proof}
    Let $w \in \W_h$ with $\rank(w) = r \leq \frac{h}{2}$.
    Let $w' \in \fneqclass{w}$.
    We construct a piecewise linear path from $w$ to $w'$ with $7$~segments.
    By \cref{thm:connectivity}, a path exists via the canonical
    representative parameter $v = \CALL{Canonicalise}(w)$.
    However, this path has excessive length.
    We compress the length to~$7$ by exploiting the following opportunities
    to parallelise segments and `cut corners'.
    These optimisation steps are illustrated in \cref{fig:optimisation}.
    \begin{enumerate}[label=(\alph*)]
        \item
            Let the Stage~1 result from \cref{algo:parameter-canonicalisation}
            for $w$ be denoted $u$.
            Let the Stage~1 result for $w'$ be denoted $u'$.
            Instead of following the unit negation and exchange
            transformations from $u$ to $v$, and then back to $u'$,
            we transform $u$ into $u'$ directly, not (necessarily) via $v$.

        \item
            We connect $w$ to $u$ using two segments, implementing all
            iterations of Stage~1{} in parallel.
            The first segment shifts the outgoing weights from the blank
            units to the non-blank units and the output unit bias.
            The second segment interpolates the blank units' incoming weights
            and biases to zero.
            We apply the same optimisation to connect $w$ and $u'$.

        \item
            We connect $u$ and $u'$ using two blank-exchange manoeuvres
            (6~segments), exploiting the majority of blank units as
            `temporary storage'.
            First, we blank-exchange the non-blank units of $u$ into blank
            units of $u'$, resulting in a parameter $\bar u'$ sharing no
            non-blank units with $u'$.
            Then, we (negative) blank-exchange those weights into the
            appropriate non-blank units of $u'$,
            implementing the unit negation and exchange transformations
            relating $u$, $\bar u'$, and $u'$.

        \item
            The manoeuvres in (b) and (c) begin and/or end by interpolating
            incoming weights and biases of blank units from and/or to zero,
            while the outgoing weights are zero.
            We combine adjacent beginning/end segments together, 
            without (necessarily) passing through zero.
            This results in the required seven-segment path, tracing the
            sequence of parameters
                $w, \w1, \w2, \ldots, \w6, w' \in W_h$.
    \end{enumerate}
    
    \begin{figure}[!ht]
        \centering
        \includegraphics{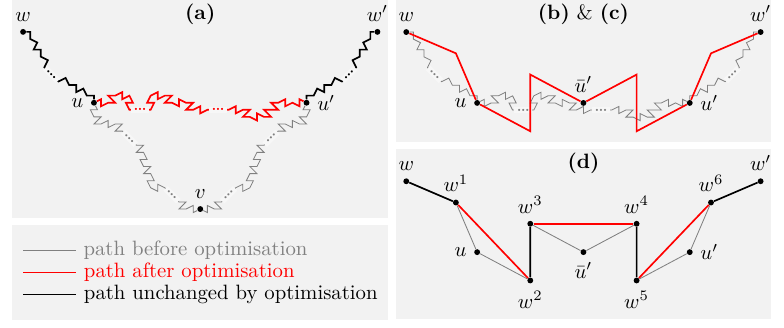}
        \caption{\label{fig:optimisation}%
            A conceptual illustration of the four path optimisations,
            producing a seven-segment piecewise linear path of equivalent
            parameters in a high-dimensional parameter space.
            \textbf{(a)}~Follow unit negation and exchange transformations
                directly between reduced parameters, not via the canonical
                parameter.
            \textbf{(b)}~\& \textbf{(c)}~Parallelise the reduction steps, and
                use the majority of blank units to parallelise the
                transformations.
            \textbf{(d)}~Combine first/last segments of reduction and
                blank-exchange manoeuvres.
        }
    \end{figure}
    
    To describe the constructed path in detail, we introduce the following
    notation for the components of the key parameters
        $w, w', u, u', \w1, \w2, \ldots, \w6 \in \W_h$:
    \begin{align*}
        w   &= (\prange{\pa{w}}{\pb{w}}{\pc{w}}, \pd{w})
    &
        u   &= (\prange{\pa{u}}{\pb{u}}{\pc{u}}, \pd{u})
    \\
        w'  &= (\prange{\pa{w'}}{\pb{w'}}{\pc{w'}}, \pd{w'})
    &
        u'  &= (\prange{\pa{u'}}{\pb{u'}}{\pc{u'}}, \pd{u'})
    \\
        w^k &= (\prange{\pa{k}}{\pb{k}}{\pc{k}}, \pd{k})
    & 
        (k   &= \range16)
    .
    \end{align*}

    Of the $h$ units in $u$, exactly $h-r$ are blank---those in the set $Z$
    from $\CALL{Canonicalise}(w)$.
    Denote the complement set of $r$ non-blank units
        $U = \setrange1h \setminus Z$.
    Likewise, define $Z'$ and $U'$ from $u'$.

    With notation clarified, we can now describe the key points
    $\w1,\ldots,\w6$ in detail, while showing that the entire path is
    contained within the functional equivalence class $\fneqclass{w}$.
    \begin{enumerate}
        \item
            The first segment interpolates 
                each outgoing weight from $\pa{w}_i$ to $\pa{u}_i$,
            and interpolates the output bias from $\pd{w}$ to $\pd{u}$.
            That is, $\w1 = ( \prange {\pa{u}} {\pb{w}} {\pc{w}}, \pd{u})$.
            
            To see that this segment is within $\fneqclass{w}$, observe that
            since the incoming weights and biases are unchanged between the
            two parameters,
                $f_{t \w1 + (1-t) w}(x) = t f_{\w1}(x) + (1-t) f_w(x)$
            for $x \in \Reals$ and $t \in [0,1]$.
            To show that $f_{w} = f_{\w1}$, we construct a function
                $\tau : \setrange1h \to \setrange{0,1}{h}$
            from identity following each iteration of Stage~1 of
                $\CALL{Canonicalise}(w)$:
            when the second branch is chosen, remap $\tau(i)$ to~$0$; and
            when the third or fourth branch is chosen,
            for $k \in \tau^{-1}[i]$ (including $i$~itself), remap $\tau(k)$
            to~$j$.
            Moreover, we define a sign vector $\sigma \in \set{-1,+1}^h$ where
                $\sigma_i = -1$ if $\sign{\pb{w}_i} = -1$, otherwise
                $\sigma_i = +1$.
            Then:
            \begin{align*}
                f_w(x)
                &=  \textstyle
                    \pd{w}
                    + \sum_{j=0}^k
                        \sum_{i\in\tau^{-1}[j]}
                            \pa{w}_i \tanh(\pb{w}_i x + \pc{w}_i)
                \\
                &=  \textstyle
                        \pd{w}
                        + \sum_{i\in\tau^{-1}[0]} \pa{w}_i \tanh(\pc{w}_i)
                    + \sum_{j=1}^h
                        \left(
                            \sum_{i\in\tau^{-1}[j]} \sigma_j \sigma_i \pa{w}_i
                        \right)
                        \tanh(\pb{w}_j x + \pc{w}_j)
                \\
                &=  \textstyle
                    \pd{u} + \sum_{j=1}^h \pa{u}_j \tanh(\pb{w}_j x + \pc{w}_j)
                = f_{\w1}(x)
            .
            \end{align*}

        \item
            The second segment completes the reduction and begins the first
            blank-exchange manoeuvre to store the nonzero units in $Z'$.
            For $i \in U \cap U'$, pick distinct `storage' units
                $j \in Z \cap Z'$.
            There are enough, as $r \leq \frac{h}2$ by assumption thus
                $
                    \cardinality{U \cap U'}
                    = \cardinality{U} - \cardinality{Z \cap U'}
                    = r - \cardinality{Z \cap U'}
                    \leq (h-r) - \cardinality{Z \cap U'}
                    = \cardinality{Z'} - \cardinality{Z \cap U'}
                    = \cardinality{Z' \cap Z}
                $.
            Interpolate unit $j$'s incoming weight
                from $\pb{w}_j$ to $\pb{w}_i$
            and interpolate its bias from $\pc{w}_j$ to $\pc{w}_i$.
            Meanwhile, for all other $j \in Z$, interpolate the incoming
            weight and bias to zero.
            This segment is within $\fneqclass{w}$ as for $j \in Z$,
            $\pa1_j = \pa{u}_j = 0$ by definition of $Z$.
        
        \item
            The third segment shifts the outgoing weights from the units in
                $U \cap U'$
            to the units in $Z \cap Z'$ prepared in step~(2).
            For $i \in U \cap U'$, pick the same storage unit $j$ as in
            step~(2).
            Interpolate
                unit $j$'s outgoing weight from $\pa{u}_j=0$ to $\pa{u}_i$
            and interpolate
                unit $i$'s outgoing weight from $\pa{u}_i$ to zero.
            This segment is within $\fneqclass{w}$ as $\pb2_i = \pb2_j$ and
            $\pc2_i = \pc2_j$ by step~(2).
        
        \item
            The fourth segment completes the first blank-exchange manoeuvre
            and begins the second, to form the units of $u'$.
            For $i \in U'$,
                interpolate unit $i$'s incoming weight
                from $\pb{3}_i$ to $\pb{u'}_i$
            and interpolate its bias
                from $\pc{3}_i$ to $\pc{u'}_i$.
            This segment is within $\fneqclass{w}$ because
            for $i \in U' \cap Z$, $\pa{3}_i = \pa{u}_i = 0$ by definition of
                $Z$,
            and for $i \in U' \cap U$, $\pa{3}_i = 0$ by step~(3).
        
        \item
            The fifth segment shifts the outgoing weights from the selected
            units in $Z'$ to the units in $U'$ prepared in step~(4).
            We simply interpolate each unit $i$'s outgoing weight to
            $\pa{u'}_i$.
            
            To see that the segment is within $\fneqclass{w}$, note that
            $u$ and $u'$ are related by some unit negation and exchange
            transformations.
            Therefore, there is a correspondence between their sets of
            nonzero units, such that corresponding units have the same
            (or negated) incoming weights and biases.
            Due to steps (2)--(4) there are $r$ `storage' units in $\w4$ with
            the weights of the units of $u$, and the correspondence extends
            to these storage units.
            Since the storage units are disjoint with $U'$, this fifth segment
            has the effect of interpolating the outgoing weight of each of the
            storage units $j \in Z'$ in $\w4$
                from $\pa{u}_i$ to zero (where $i$ is as in step~(3)),
            while interpolating the outgoing weight of its corresponding
            unit $k \in U'$
                from zero to $\pm \pa{u}_i = \pa{u'}_k$
            (where the sign depends on the unit negation transformations
            relating $u$ and $u'$).

        \item
            The sixth segment completes the second blank-exchange manoeuvre
            and begins to reverse the reduction.
            For $i \in Z'$,
                interpolate unit $i$'s incoming weight from $\pb5_i$ to
                $\pb{w'}_i$,
            and interpolate its bias from $\pc5_i$ to $\pc{w'}_i$.
            This segment is within $\fneqclass{w}$ as
                for $i \in Z'$,
                $\pa{5}_i = \pa{u'}_i = 0$ by definition of $Z'$.

        \item
            The seventh segment, of course, interpolates from $\w6$ to $w'$.
            To see that this segment is within $\fneqclass{w}$, note that
            by steps~(5) and~(6),
                $\w6 = (\prange{\pa{u'}}{\pb{w'}}{\pc{w'}}, \pd{u'})$
            (noting $\pd{u} = \pd{u'}$ since the output unit's bias is
            preserved by unit transformations).
            So the situation is the reverse of step~(1), and a similar proof
            applies.
            \qedhere
    \end{enumerate}
\end{proof}

\section{Discussion}
\label{sec:discussion}

In this paper, we have investigated the functional equivalence class for
reducible neural network parameters, and its connectivity properties.
These reducible functional equivalence classes are a complex union of
manifolds, displaying the following rich qualitative structure.
\begin{itemize}
    \item
        There is a central discrete array of reduced-form parameters,
        with a maximal number of blank units spread throughout an
        irreducible subnetwork.
        These reduced-form parameters are related by unit negation and
        exchange transformations, like for irreducible parameters.
    
    \item
        Unlike in the irreducible case, these reduced-form parameters are
        connected by a network of piecewise linear paths.
            Namely, these are (negative) blank-exchange manoeuvres, and,
            when there are multiple blank units, simultaneous parallel
            blank-exchange manoeuvres.

    \item
        Various manifolds branch away from this central network, tracing
        in reverse the various reduction operations (optionally in parallel).
        Dually, these manifolds trace methods for \emph{adding} units
            \citep[cf.,][]{Fukumizu+Amari2000,Fukumizu+2019,Simsek+2021}.
\end{itemize}

\Cref{thm:diameter} establishes that with a majority of blank units, the
diameter of this parameter network becomes a small constant number of linear
segments.
With fewer blank units it will sometimes require more blank-exchange
manoeuvres to traverse the central network.
Future work could investigate the trade-offs between shortest path length and
rank for different unit permutations.

\paragraph{Towards modern architectures.}

We have studied single-hidden-layer hyperbolic tangent networks, but
structural redundancies arising from zero, constant, or proportional units
(reducibility conditions \ref{item:reducibility:1}--\ref{item:reducibility:3})
are a generic feature of feed-forward network components.
Unit negation symmetries are characteristic of odd nonlinearities;
other nonlinearities will exhibit similar redundancies due to their own
affine symmetries.
In more complex architectures there will be additional sources of
redundancy, such as interactions between layers or specialised computational
structures.

We call for future work to seek out, catalogue, and thoroughly investigate
such sources of redundancy, rather than assuming their irrelevance as part
of measure zero subset of the parameter space.
Our results serve as a starting point for future work in this direction.
The results of \citet{Vlacic+Bolcskei2021}, significantly generalising
\citet{Sussmann1992}, would be a useful complement.

\paragraph{Functional equivalence and deep learning.}

Functionally equivalent parameters have equal loss.
Continuous directions and piecewise linear paths within reducible functional
equivalence classes
    (Theorems~\ref{thm:characterisation}, \ref{thm:connectivity}, and
    \ref{thm:diameter})
therefore imply flat directions and equal-loss paths in the loss landscape.
More broadly, the set of low- or zero-loss parameters is a union of
functional equivalence classes, including, possibly (or \emph{necessarily},
given sufficient overparameterisation), reducible ones.

Understanding reducible functional equivalence classes may be key to
understanding these topics.
Of special interest is the connection to theoretical work involving
    unit pruning \citep{Kuditipudi+2019}
and permutation symmetries \citep{Brea+2019}.
Of course, having the same loss does not imply functional equivalence---indeed,
\citet{Garipov+2018} observe functional non-equivalence in low-loss paths.
The exact relevance of reducible parameters to these topics remains to be
clarified.

If the loss landscape is smooth, the comments above hold approximately for
irreducible parameters that are merely near some reducible parameter.
Future work should develop techniques to measure proximity to low-rank
parameters\anon[,]{ \citep[see][]{me-thesis,me-prank},}
and empirically investigate the prevalence of approximate reducibility among
parameters encountered during learning.

\section{Conclusion}
\label{sec:conclusion}

While reducible parameters comprise a measure zero subset of the parameter
space, their functional equivalence classes may still be key to understanding
the structure of the parameter space and, in turn, the loss landscape on which
deep learning takes place.
We have taken the first step towards understanding functional equivalence
beyond irreducible parameters, by investigating the setting of
single-hidden-layer hyperbolic tangent networks.
Due to structural redundancy, reducible functional equivalence classes are
much richer than their irreducible counterparts.
By accounting for various kinds of structural redundancy,
we offer a characterisation of reducible functional equivalence classes
and an investigation of their piecewise linear connectivity properties.


\anon{
\section*{Acknowledgements}
\addcontentsline{toc}{section}{Acknowledgements}

Contributions (\ref{itm:contrib:1}), (\ref{itm:contrib:2}), and
(\ref{itm:contrib:3}) also appear in MFR's minor thesis
    \citep[\S5]{me-thesis}.
MFR received financial support from the Melbourne School of Engineering
Foundation Scholarship and the Long-Term Future Fund while completing
this research.
We thank Daniel Murfet for providing helpful feedback during this research
and during the preparation of this manuscript.
}


\bibliographystyle{far}
\bibliography{main}
\addcontentsline{toc}{section}{References}

\clearpage
\appendix

\section{Generalising to multi-dimensional inputs and outputs}
\label{apx:multidim}

In this appendix, we consider a slightly more general family of architectures
than that introduced in \cref{sec:prelims}.
Namely, we consider a family of fully-connected, feed-forward neural network
architectures with
    $n \in \PosNats$ input units,
    $m \in \PosNats$ biased linear output units,
and a single hidden layer of
    $h \in \Natz$ biased hidden units
    with the hyperbolic tangent nonlinearity.
With minor modifications, described in the remainder of this appendix, all
definitions, algorithms, theorems, and proofs directly generalise from the
case $n=m=1$ to arbitrary $n$ and $m$.

\paragraph{Multi-dimensional architecture.}

Let $n \in \PosNats$,
    $m \in \PosNats$,
and $h \in \Natz$.
Define the generalised parameter space
    $\W[n,m]_h = \Reals^{(n+m+1)h+m}$.
The weights and biases of the network's units are encoded in the parameter
vector in the format
    $(\abcrange, d) = w \in \W[n,m]_h$
where
    for each hidden unit $i = \range1h$ there is
        an \emph{outgoing weight vector} $a_i \in \Reals^m$,
        an \emph{incoming weight vector} $b_i \in \Reals^n$,
        and a \emph{bias} $c_i \in \Reals$;
    and $d \in \Reals^m$ is
        an \emph{output unit bias vector}
        containing one bias value for each output unit.
This time, $w$ indexes a multi-dimensional mathematical function
    $f_w : \Reals^n \to \Reals^m$
defined as follows:
\begin{equation}
    \label{eq:fwnm}
    f_w(x) = d + \sum_{i=1}^h a_i \tanh(b_i \cdot x + c_i).
\end{equation}

Note that we use the same tuple notation and ordering
    $(\abcrange, d)$
but now the $a_i$, the $b_i$, and $d$ all denote multi-component vectors.
Accordingly, in \cref{eq:fwnm},
    $b_i$ and $x$ are now multiplied using the inner (dot) product, rather
    than scalar multiplication, since they are both vectors in $\Reals^n$.
Moreover,
    $a_i \in \Reals^m$ as a vector is to be multiplied by the scalar
    $\tanh(b_i \cdot x + c_i)$. That is, the sum is over vectors of
    contributions to output units from each hidden unit.

To generalise the results of the main paper to this setting the first change
necessary is to replace all mentions of scalar weights with these vectors of
weights, and other similar changes such as reading the literal zero as vector
zero where appropriate.

\paragraph{Signing and sorting incoming weight vectors.}

The lexicographic order on $\Reals^n$, denoted $\xleq$, is a relation such
that for $u, v \in \Reals^n$, $u \xleq v$ if and only if $u=v$ or, in the
first index $i=\range1n$ where $u$ and $v$ differ, $u_i < v_i$.
From this definition we follow the usual conventions in defining
    $\xlt$, $\xgt$, and $\xgeq$.
Finally, define the \emph{lexicographic sign} of $v \in \Reals^n$, denoted
$\xsign{v}$, as follows:
\begin{equation*}
    \xsign{v}
    = \begin{cases}
        +1  &\quad (v \xgt 0), \\
        ~0  &\quad (v  =   0), \\
        -1  &\quad (v \xlt 0).
    \end{cases}
\end{equation*}

The parameter canonicalisation algorithm and some of the other theorems and
proofs make repeated use of the signs of incoming weight vectors.
The lexicographic sign satisfies the requisite properties of the scalar sign
function in these uses and so the second change necessary to generalising the
results is to replace uses of $\sign{\cdot}$ with uses of $\xsign{\cdot}$.

This lexicographic order relation is of course also a total order \citep[see,
e.g.,][Theorem 4.1.11]{Harzheim2005}.
Therefore, it allows one to sort a list of vectors.
Sorting units by decreasing incoming weights is a key step in Stage~3 of
    \cref{algo:parameter-canonicalisation},
and so the third change necessary is to use decreasing lexicographic order
($\xgeq$) in this stage.

\paragraph{Generalising Sussmann's equivalence theorem.}
        
The proofs in the main paper rely on the results of \citet{Sussmann1992} on
the equivalence between reducibility and non-minimality, and the fact that
irreducible functionally equivalent parameters are related by unit negation
and exchange transformations.
\citet{Sussmann1992} studied a setting with multiple input units but only
a single output unit.
\Cref{lemma:generalised-sussmann-1,lemma:generalised-sussmann-2}
    generalise these results to the multi-output setting.\footnote{%
        The proofs reduce the multi-output case to the single-output case, so
        they still rely on the results of \citet{Sussmann1992}.
        A generalisation similar to \cref{lemma:generalised-sussmann-1} is
        given by \citet{Fukumizu1996}.
}
The final necessary change to generalise the results in the main paper is to
replace all references to Sussmann's results with references to
    \cref{lemma:generalised-sussmann-1} or
    \cref{lemma:generalised-sussmann-2}.

The definitions of unit negation and exchange transformations, reducibility,
and non-minimality all generalise to arbitrary $n$ and $m$ with the
above-mentioned changes. These definitions are repeated here for convenience.

A \emph{unit negation transformation} is a function
    $\sigma_i : \W[n,m]_h \to \W[n.m]_h$ for $i = \range1h$,
where
\begin{equation*}
    \sigma_i(\abcrange, d)
    = (
        a_1,b_1,c_1,
        \ldots,
        {\color{accent2}-}a_i,
        {\color{accent2}-}b_i,
        {\color{accent2}-}c_i,
        \ldots,
        a_h,b_h,c_h,
        d
    ).
\end{equation*}

A \emph{unit exchange transformation} is a function
    $\tau_{i,j} : \W[n,m]_h \to \W[n,m]_h$
for $i,j=\range1h$, where
\begin{align*}
    \tau_{i,j}(\abcrange, d)
    &= (
        a_1, b_1, c_1,
        \ldots,
        c_{i-1},
        a_{\color{accent2}j},
        b_{\color{accent2}j},
        c_{\color{accent2}j},
        a_{i+1},
    \\ & \hspace{5.6em} 
        \ldots,
        c_{j-1},
        a_{\color{accent2}i},
        b_{\color{accent2}i},
        c_{\color{accent2}i},
        a_{j+1},
        \ldots,
        a_h,b_h,c_h,
        d
    ).
\end{align*}

A parameter $w = (\abcrange, d) \in \W[n,m]_h$ is \emph{reducible} if and
only if it satisfies any of the following conditions
    (otherwise, $w$ is \emph{irreducible}):
\begin{enumerate}[label=(\roman*)]
    \item
        $a_i = 0$ for some $i$,
    \item
        $b_i = 0$ for some $i$,
    \item
        $(b_i, c_i) = (b_j, c_j)$ for some $i \neq j$, or
    \item
        $(b_i, c_i) = (-b_j, -c_j)$ for some $i \neq j$.
\end{enumerate}

A parameter $w \in \W[n,m]_h$ is \emph{non-minimal} if and only if $w$ is
functionally equivalent to some $w' \in \W[n,m]_{h'}$ with fewer hidden units
$h' < h$.

\begin{lemma}
    \label{lemma:generalised-sussmann-1}
    For $w \in \W[n,m]_h$,
    $w$ is reducible if and only if $w$ is non-minimal.
\end{lemma}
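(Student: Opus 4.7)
The plan is to prove the two implications separately, reducing the multi-output case to Sussmann's single-output theorem by projecting the outputs through a generic linear functional on $\Reals^m$.

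For the forward direction (reducible $\Rightarrow$ non-minimal), I would exhibit an explicit $(h-1)$-unit parameter in each of the four cases: (i)~if $a_i = 0$, delete unit $i$; (ii)~if $b_i = 0$, absorb the constant vector $a_i \tanh(c_i)$ into the output bias $d$ and delete unit $i$; (iii)~if $(b_i, c_i) = (b_j, c_j)$, replace $a_j$ with $a_j + a_i$ and delete unit $i$; (iv)~if $(b_i, c_i) = (-b_j, -c_j)$, use $\tanh(-z) = -\tanh(z)$ to replace $a_j$ with $a_j - a_i$ and delete unit $i$. Each operation yields a parameter in $\W[n,m]_{h-1}$ implementing the same function $f_w$, witnessing non-minimality. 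These are exactly the operations underlying Stage~1 of \cref{algo:parameter-canonicalisation}, now with vector-valued outgoing weights and output bias, so correctness is immediate from the same algebraic identities.

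For the reverse direction I would prove the contrapositive: assume $w$ is irreducible and show it is minimal. Since each $a_i \in \Reals^m$ is nonzero by irreducibility condition~(i), the set $\bigcup_{i=1}^h \Set{\alpha \in \Reals^m}{\alpha \cdot a_i = 0}$ is a finite union of proper hyperplanes, so one can pick some $\alpha \in \Reals^m$ avoiding all of them. Then $g = \alpha \cdot f_w : \Reals^n \to \Reals$ is implemented by the single-output parameter $\tilde w = (\alpha \cdot a_i, b_i, c_i)_{i=1}^h$ with scalar output bias $\alpha \cdot d$. By choice of $\alpha$ the scalar outgoing weights $\alpha \cdot a_i$ are nonzero, while the remaining irreducibility conditions on $b_i$ and on the pairs $(b_i, c_i)$ transfer verbatim from $w$ to $\tilde w$, so $\tilde w$ is irreducible in the single-output sense of \citet{Sussmann1992}, hence minimal, yielding $\rank(g) = h$. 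For any equivalent $w' \in \W[n,m]_{h'}$ the same linear projection produces a single-output parameter with $h'$ units implementing $g$, forcing $h' \geq h$ and so $w$ is minimal.

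The main obstacle is ensuring the single-output projection remains irreducible. Condition~(i) of irreducibility is exactly what guarantees a suitable $\alpha$ exists, while conditions~(ii)--(iv) depend only on the $b_i$ and $c_i$ and are preserved under any linear output projection. So once $\alpha$ is chosen generically, Sussmann's theorem applies off the shelf and delivers the bound; no new analysis of $\tanh$ or its identifiability is needed beyond what is already in the single-output proof.
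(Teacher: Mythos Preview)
Your proof is correct. The forward direction is identical to the paper's. For the reverse direction, the paper proceeds differently: it proves non-minimal $\Rightarrow$ reducible directly, by projecting onto each of the $m$ output coordinates separately, applying Sussmann's single-output result to each coordinate to deduce that every $w_{(\mu)}$ is reducible, and then handling the residual case in which each $w_{(\mu)}$ is reducible only via condition~(i). That last step forces the paper to re-enter Sussmann's internal machinery (the linear-independence Lemma~3.1 for $\tanh$) to argue that some unit has $a_{i,\mu}=0$ simultaneously for all $\mu$.

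Your contrapositive argument via a single \emph{generic} linear functional $\alpha$ is more economical: by choosing $\alpha$ outside the finite union of hyperplanes $\{\alpha\cdot a_i=0\}$ you guarantee the projected parameter is already irreducible, so Sussmann's theorem applies once as a black box and immediately yields $h'\geq h$. This sidesteps both the case split and the appeal to the $\tanh$ independence lemma. The paper's coordinate-wise decomposition does have the advantage that the same scaffolding is reused in the proof of \cref{lemma:generalised-sussmann-2}, but as a stand-alone proof of \cref{lemma:generalised-sussmann-1} your route is cleaner.
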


\begin{proof}
($\Rightarrow$):
    A smaller functionally equivalent parameter can be constructed as follows.
    \begin{enumerate}[label=(\roman*)]
        \item
            If $a_i = 0$ for some $i$, then hidden unit $i$ fails to
            contribute to the function.
            Construct a functionally equivalent parameter
                $w' \in \W[n,m]_{h-1}$
            with hidden unit $i$ omitted:
            \begin{equation*}
            w' = (
                a_1, b_1, c_1,
                \ldots,
                a_{i-1}, b_{i-1}, c_{i-1},
                a_{i+1}, b_{i+1}, c_{i+1},
                \ldots,
                a_h, b_h, c_h,
                d
            ).
            \end{equation*}

        \item
            If $b_i = 0$ for some $i$, then hidden unit $i$ contributes
            only a constant to the function.
            Construct a functionally equivalent parameter
                $w' \in \W[n,m]_{h-1}$
            with hidden unit $i$ omitted and the output unit bias vector
            changed to compensate:
            \begin{equation*}
            w' = (
                a_1, b_1, c_1,
                \ldots,
                a_{i-1}, b_{i-1}, c_{i-1},
                a_{i+1}, b_{i+1}, c_{i+1},
                \ldots,
                a_h, b_h, c_h,
                d + a_i \tanh(c_i)
            ).
            \end{equation*}

        \item
            If $(b_i, c_i) = (b_j, c_j)$ for some $i \neq j$,
            then hidden units $i$ and $j$ contribute proportionately.
            They can be combined into a single unit (say,~$j$) with the same
            incoming weights and bias, and a combined outgoing weight vector.
            Construct a functionally equivalent parameter
                $w' \in \W[n,m]_{h-1}$ accordingly:
            \begin{equation*}
                w' = (
                    a_1, b_1, c_1,
                    \ldots,
                    c_{i-1}, a_{i+1},
                    \ldots,
                    c_{j-1}, a_j + a_i, b_j, c_j, a_{j+1},
                    \ldots,
                    a_h, b_h, c_h,
                    d
                ).
            \end{equation*}

        \item
            If $(b_i, c_i) = - (b_j, c_j)$ for some $i \neq j$,
            then hidden units $i$ and $j$ contribute in negative proportion.
            Due to the odd property of $\tanh$ they can be combined into a
            single unit (say,~$j$) with incoming weight and bias vectors
            $(b_j, c_j)$ and a combined outgoing weight vector.
            Construct a new parameter $w' \in \W[n,m]_{h-1}$ accordingly:
            \begin{equation*}
                w' = (
                    a_1, b_1, c_1,
                    \ldots,
                    c_{i-1}, a_{i+1},
                    \ldots,
                    c_{j-1}, a_j - a_i, b_j, c_j, a_{j+1},
                    \ldots,
                    a_h, b_h, c_h,
                    d
                ).
            \end{equation*}

    \end{enumerate}
    In all cases, the new parameter $w' \in \W[n,m]_{h-1}$ has $f_{w'} = f_w$,
    so $w$ is non-minimal.

($\Leftarrow$):
    We reduce to the single-output case and apply the result of
    \citet{Sussmann1992} to show that $w$ satisfies at least one of the
    reducibility conditions.

    To reduce to the single-output case, we introduce some notation.
    From the function
        $f_w : \Reals^n \to \Reals^m$
    define a series of component functions
        $f_w^{(1)}, f_w^{(2)}, \ldots, f_w^{(m)} : \Reals^n \to \Reals$
    such that for $x \in \Reals^n$,
    \begin{equation*}
        f_w(x) = \left(f_w^{(1)}(x), f_w^{(2)}(x), \ldots, f_w^{(m)}(x)\right).
    \end{equation*}
    Each of these component functions is a simple neural network function
    in an architecture with $n$~input units and~$1$ output unit,
    corresponding to a subgraph of the connection graph of the original
    neural network, as illustrated in \cref{fig:nn-subgraphs}.
    
    \begin{figure}[!h]
        \centering
        \includegraphics{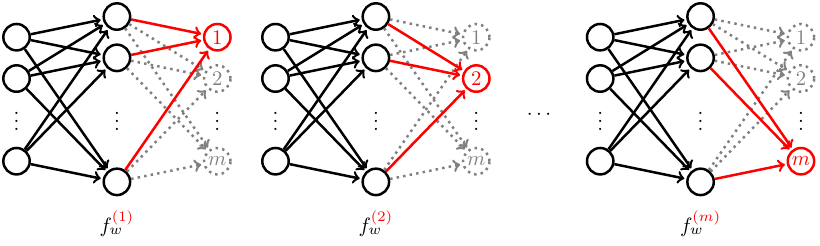}
        \caption{%
            \label{fig:nn-subgraphs}
            The connection graphs of the component functions of $f_w$.
            Included units and weights are solid.
            The hidden units of each network share the same incoming weights
            (and biases, not shown).
        }
    \end{figure}

    Denote the corresponding (overlapping) subvectors of $w \in \W[n,m]_h$
    as
        $w_{(1)}, \ldots, w_{(m)} \in \W[n,1]_h$.
    That is, for $\mu = \range1m$,
    \begin{equation*}
        w_{(\mu)}
        = (
            a_{1,\mu}, b_1, c_1,
            \ldots,
            a_{h,\mu}, b_h, c_h,
            d_\mu
        )
        \in \W[n,1]_h
        .
    \end{equation*}
   
    Now, let $w' = (\prange{a'}{b'}{c'}, d') \in \W_{h'}$
    such that $f_{w'} = f_w$ where $h'$ is the smallest number of hidden
    units required to implement $f_w$ ($h' < h$ by assumption of
    non-minimality).
    Apply the same decomposition to $f_{w'}$ to define
        $f_{w'}^{(1)}, \ldots, f_{w'}^{(m)}$,
    and to define
        $w'_{(1)}, \ldots, w'_{(m)} \in \W[n,1]_{h'}$.
    
    Apply the results of \citet{Sussmann1992} as follows.
    Since $f_w = f_{w'}$,
        $f_{w}^{(\mu)} = f_{w'}^{(\mu)}$ for $\mu = \range1m$.
    It follows that for each $w_{(\mu)}$, $w'_{(\mu)}$ is a functionally
    equivalent parameter using fewer units.
    Therefore, the reducibility conditions (in the special case of $m=1$)
    must hold for each $w_{(\mu)}$ \citep{Sussmann1992}.

    Since conditions (ii--iv) only depend on incoming weights and biases,
    if any of these conditions hold for any $w_{(\mu)}$, then they must
    also hold for $w$ itself (which shares the same incoming weights and
    biases), and the proof is complete.
    It remains only to consider the case in which conditions (ii--iv) fail to
    hold for any $w_{(\mu)}$, and to show that condition (i) holds for $w$
    itself in this case.

    We must introduce yet further notation.
    For $i = \range1h$ denote by $\varphi_i : \Reals^n \to \Reals$
    the function $\varphi_i(x) = \tanh(b_ix+c_i)$.
    Similarly for $j = \range1{h'}$ denote by $\psi_j : \Reals^n \to \Reals$
    the function $\psi_j(x) = \tanh(b'_jx+c'_j)$.
    Then, since we have ruled out reducibility conditions (ii--iv) for $w$,
    no $\varphi_i$ is constant (ii) and no two are proportional (iii, iv).
    The same holds for the $\psi_j$---conditions (i--iv) do not hold for
    $w'_{(\mu)}$, since $h'$ was assumed to be minimal.
    Yet, for $\mu=\range1m$, the linear combination of functions
        \begin{equation*}
            d_\mu + \sum_{i=1}^h a_{i,\mu} \varphi_i
              - d'_\mu - \sum_{j=1}^{h'} a'_{j,\mu} \psi_j
            = f_w^{(\mu)} - f_{w'}^{(\mu)}
            = 0
        \end{equation*}
    yields the zero function.
    This linear combination remains when excluding those terms with 
        $a_{i,\mu} = 0$ or $a'_{j,\mu} = 0$.
    Applying the same reasoning as that in \citet{Sussmann1992}, due to
    the independence property of the hyperbolic tangent function
        \citep[Lemma 3.1]{Sussmann1992}
    the remaining terms must be in bijection, such that
    \begin{equation}\label{eq:unit-bondage}
        \varphi_i = \pm \psi_j
    \end{equation}
    for some $j$ with $a'_{j,\mu} \neq 0$
    for each $i$ with $a _{i,\mu} \neq 0$.
    
    To complete the proof, note that these relationships
        (\ref{eq:unit-bondage})
    between the units of $w$ and $w'$ are independent of $\mu$.
    However, the relationships are ``exclusive'' in the sense that no two
        $\varphi_i$ can be proportional to the same $\psi_j$,
    else they would also be proportional to each other (ruled out above).
    Since there are only $h'$ units $\subrange1{h'}\psi$, it follows that
    there must be one hidden unit $i$ (actually at least $h-h'$ many units)
    for which $a_{i,\mu} = 0$ for all $\mu=\range1m$
        (allowing $\varphi_i$ to avoid any such relationship).
    That is, $a_i = \vecrange{i,1}{i,m}a = 0$, satisfying condition (i) for
    $w$ as required.
\end{proof}

\begin{lemma}
    \label{lemma:generalised-sussmann-2}
    Let $w \in \W[n,m]_h$ be irreducible, and let $w' \in \W[n,m]_h$.
    If $w$ and $w'$ are functionally equivalent then there exists a
    compositional chain of unit negation and exchange transformations,
    collectively a transformation
        $T : \W[n,m]_h \to \W[n,m]_h$,
    such that
        $w' = T(w)$.
\end{lemma}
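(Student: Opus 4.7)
My plan is to reduce to the independence argument already used in the proof of \cref{lemma:generalised-sussmann-1}, applied globally rather than output-by-output. First, I observe that since $f_w = f_{w'}$ and they live in the same parameter space $\W[n,m]_h$, the minimal number of hidden units needed to implement the common function is the same for both, so by \cref{lemma:generalised-sussmann-1} the irreducibility of $w$ forces $w'$ to also be irreducible. In particular, each $a'_j$ is nonzero as a vector in $\Reals^m$, each $b'_j$ is nonzero, and no two incoming pairs $(b'_j, c'_j)$ coincide up to sign.

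Next, writing $\varphi_i(x) = \tanh(b_i \cdot x + c_i)$ and $\psi_j(x) = \tanh(b'_j \cdot x + c'_j)$, the equation $f_w - f_{w'} = 0$ becomes, componentwise for each $\mu = \range1m$,
\begin{equation*}
    (d_\mu - d'_\mu) + \sum_{i=1}^h a_{i,\mu}\varphi_i - \sum_{j=1}^h a'_{j,\mu}\psi_j = 0.
\end{equation*}
I would like to apply the linear independence property of $\tanh$ \citep[Lemma~3.1]{Sussmann1992} to conclude a bijective pairing of the $\varphi_i$ with the $\psi_j$ (up to sign). The subtle point is that this lemma applies only after discarding terms with zero coefficient, and a given $a_{i,\mu}$ or $a'_{j,\mu}$ may vanish for some $\mu$. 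I would handle this by selecting, for each index $i$, some $\mu(i)$ with $a_{i,\mu(i)} \neq 0$ (such a $\mu$ exists because $a_i \neq 0$ as a vector, by irreducibility of $w$). Applying the independence result to the $\mu(i)$-th component then forces $\varphi_i = \pm \psi_{j}$ for some $j$ with $a'_{j,\mu(i)} \neq 0$.

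The hard part, and where I would spend the most care, is promoting this to a single coherent permutation and sign vector that works simultaneously for all outputs. Since the $\varphi_i$ are pairwise non-proportional (by irreducibility of $w$) and likewise for the $\psi_j$, the pairing $i \mapsto j$ together with its sign is uniquely determined and independent of which $\mu$ witnessed it. A counting argument (both sides have $h$ such functions, and the pairing is injective in both directions) then shows this defines a permutation $\pi$ of $\setrange1h$ together with a sign vector $\sigma \in \set{-1,+1}^h$ such that $(b'_{\pi(i)}, c'_{\pi(i)}) = \sigma_i (b_i, c_i)$ for all $i$.

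Finally, I would close the loop by matching the remaining weights. Substituting these incoming relationships back into the componentwise equation and again invoking linear independence of the $\varphi_i$ (now as a basis modulo constants), I would read off $a'_{\pi(i),\mu} = \sigma_i\, a_{i,\mu}$ for every $i$ and $\mu$, hence $a'_{\pi(i)} = \sigma_i a_i$ as vectors in $\Reals^m$; the leftover constant terms yield $d' = d$. These are exactly the relations effected by composing the unit negations $\sigma_i$ (for each $i$ with $\sigma_i = -1$) with the exchange transformations realising the permutation $\pi$, giving the required $T$ with $w' = T(w)$.
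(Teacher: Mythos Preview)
Your argument is correct. It shares the paper's componentwise decomposition but differs in which result of \citet{Sussmann1992} it leans on. The paper applies Sussmann's Theorem~2.1 (the full $m=1$ equivalence theorem) as a black box to each output component $w_{(\mu)}, w'_{(\mu)}$ after stripping out units with zero scalar outgoing weight, obtaining separate transformations $T_\mu$; it then argues these $T_\mu$ are mutually consistent because the matching they encode is determined by the shared incoming weights and biases, and that every unit participates in at least one $T_\mu$ since $a_i \neq 0$. You instead drop one level deeper and invoke only Sussmann's Lemma~3.1 (linear independence of the $\tanh$ family), building the permutation $\pi$ and sign vector $\sigma$ directly from the induced pairing $\varphi_i = \pm\psi_j$. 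Your route is a bit more self-contained and avoids the bookkeeping of reconciling the per-output transformations; the paper's route is tidier in that it delegates the pairing-and-sign step to a single citation. Both are driven by the same underlying mechanism, and your final substitution step to recover $a'_{\pi(i)} = \sigma_i a_i$ and $d' = d$ is exactly the verification the paper leaves implicit.
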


\begin{proof}
    Once again, we reduce to the case $m=1$ and appeal to \citet{Sussmann1992}.

    Suppose $w' \in \fneqclass{w}$. Introduce the same decomposition of
    the two neural networks as in the proof of
        \cref{lemma:generalised-sussmann-1},
    namely, the component functions
        $f_w^{(1)}, \ldots, f_w^{(m)}, f_{w'}^{(1)}, \ldots, f_{w'}^{(m)}$
    implemented by the parameter subvectors
        $w_{(1)}, \ldots, w_{(m)}, w'_{(1)}, \ldots, w'_{(m)} \in \W[n,1]_h$
    (cf.~\cref{fig:nn-subgraphs}).
    
    For $\mu = \range1m$, since $f_w = f_{w'}$, we have that
        $f_w^{(\mu)} = f_{w'}^{(\mu)}$.
    Now, $w_{(\mu)}$ and $w'_{(\mu)}$ are not necessarily irreducible,
    but if they are reducible then it is only by condition~(i), since
        $w_{(\mu)}$ and $w'_{(\mu)}$
    have the incoming weights and biases of
        $w$ and $w'$ respectively
    ($w$ is irreducible by assumption;
    $w'$ is irreducible because, with the same number of units as $w$,
    it is necessarily minimal, and irreducibility follows by
        \cref{lemma:generalised-sussmann-1}).
    Remove such units with zero outgoing weight from 
        $w_{(\mu)}$ and $w'_{(\mu)}$
    to produce new, functionally equivalent irreducible parameters
        $u_{(\mu)}, u'_{(\mu)} \in \W[n,1]_{\rank(w_{(\mu)})}$.
    Now by \citet[Theorem 2.1]{Sussmann1992}
    there exists a chain of unit negation and exchange transformations
        $T_\mu$ such that $u_{(\mu)} = T_\mu(u'_{(\mu)})$.
    
    For each $\mu$, $T_\mu$ implies a relationship between the units of
    $w_{(\mu)}$ and $w'_{(\mu)}$ with nonzero outgoing weights, including
    possible negations and permutations of these units.
    This same relationship must hold between those units of $w$ and $w'$
    since they share incoming weights and biases with $w_{(\mu)}$ and
    $w'_{(\mu)}$, and (since $w$ is irreducible, conditions (ii--iv))
    these incoming weights are nonzero and the incoming weight and bias
    vectors are absolutely distinct between units of the same parameter.
    Moreover, all units are involved in some such relationship because
    no unit of $w$ or $w'$ can have zero outgoing weight vector by
    reducibility condition (i).

    So, one can construct from these implied relationships a composition
    of unit negation and exchange transformations relating $w$ and $w'$
    as required.
\end{proof}

\end{document}